\newtheorem{assumption}{Assumption}
\newtheorem{theorem}{Theorem}
\newtheorem{lemma}{Lemma}
\newtheorem{claim}{Claim}
\newtheorem{remark}{Remark}
\theoremstyle{nonumberplain}
\newtheorem{proof}{Proof}
\DeclareMathOperator*{\argmin}{arg\,min}
\begin{document}

%

%

\twocolumn[

\aistatstitle{Learning Entangled Single-Sample Distributions via Iterative Trimming}

\aistatsauthor{ Hui Yuan \And Yingyu Liang}

\aistatsaddress{Department of Statistics and Finance
\\University of Science and Technology of China \And  Department of Computer Sciences
\\University of Wisconsin-Madison } ]

\begin{abstract}
In the setting of entangled single-sample distributions, the goal is to estimate some common parameter shared by a family of distributions, given one \emph{single} sample from each distribution. We study mean estimation and linear regression under general conditions, and analyze a simple and computationally efficient method based on iteratively trimming samples and re-estimating the parameter on the trimmed sample set. We show that the method in logarithmic iterations outputs an estimation whose error only depends on the noise level of the $\lceil \alpha n \rceil$-th noisiest data point where $\alpha$ is a constant and $n$ is the sample size. This means it can tolerate a constant fraction of high-noise points. These are the first such results under our general conditions with computationally efficient estimators. It also justifies the wide application and empirical success of iterative trimming in practice. Our theoretical results are complemented by experiments on synthetic data.
\end{abstract}

\section{INTRODUCTION}

This work considers the novel parameter estimation setting called entangled single-sample distributions. Different from the typical i.i.d.\ setting, here we have $n$ data points that are independent, but each is from a different distribution. These distributions are entangled in the sense that they share some common parameter, and our goal is to estimate the common parameter. For example, in the problem of mean estimation for entangled single-sample distributions, we have $n$ data points from $n$ different distributions with a common mean but different variances (the mean and all the variances are unknown), and our goal is to estimate the mean.

This setting is motivated for both theoretical and practical reasons. From the theoretical perspective, it goes beyond the typical i.i.d.\ setting and raises many interesting open questions, even on basic topics like mean estimation for Gaussians. It can also be viewed as a generalization of the traditional mixture modeling, since the number of distinct mixture components can grow with the number of samples. From the practical perspective, many modern applications have various forms of heterogeneity, for which the i.i.d.\ assumption can lead to bad modeling of their data. The entangled single-sample setting provides potentially better modeling. This is particularly the case for applications where we have no control over the noise levels of the samples. For example, the images taken by self-driving cars can have varying degrees of noise due to changing weather or lighting conditions. Similarly, signals collected from sensors on the Internet of Things can come with interferences from a changing environment.

Though theoretically interesting and practically important, few studies exist in this setting. \cite{chierichetti2014learning} considered the mean estimation for entangled Gaussians and showed the existence of a gap between estimation error rates of the best possible estimator in this setting and the maximum likelihood estimator when the variances are known.
\cite{pensia2019estimating} considered means estimation for symmetric, unimodal distributions including the symmetric multivariate case (i.e., the distributions are radially symmetric) with sharpened bounds, and provided extensive discussion on the performance of their estimators in different configurations of the variances. 
These existing results focus on specific family of distributions or focus on the case where most samples are ``high-noise" points.

On the contrary, we focus on the case with a constant fraction of "high-noise" points, which is more interesting in practice. We study multivariate mean estimation and linear regression under more general conditions and analyze a simple and efficient estimator based on iterative trimming. The iterative trimming idea is simple: the algorithm keeps an iterate and repeatedly refines it; each time it trims a fraction of bad points based on the current iterate and then uses the trimmed sample set to compute the next iterate. It is computationally very efficient and widely used in practice as a heuristic for handling noisy data. It can also be viewed as an alternating-update version of the classic trimmed estimator (e.g.,~\cite{huber2011robust}) which typically takes exponential time:
\[
  \hat\theta  = \argmin_{\theta \in \Theta; S \subseteq [n], |S| = \lceil \alpha n \rceil} \sum_{i \in S} \textrm{Loss}_i(\theta)
\]
where $\Theta$ is the feasible set for the parameter $\theta$ to be estimated, $\lceil \alpha n \rceil$ is the size of the trimmed sample set $S$, and $\textrm{Loss}_i(\theta)$ is the loss of $\theta$ on the $i$-th data point (e.g., $\ell_2$ error for linear regression). 

For mean estimation, only assuming the distributions have a common mean and bounded covariances, we show that the iterative trimming method in logarithmic iterations outputs a solution whose error only depends on the noise level of the $\lceil \alpha n \rceil$-th noisiest point for $\alpha \ge 4/5$. More precisely, the error only depends on the $\lceil \alpha n \rceil$-th largest value among all the norms of the $n$ covariance matrices. This means the method can tolerate a $1/5$ fraction of ``high-noise" points. We also provide a similar result for linear regression, under a regularity condition that the explanatory variables are sufficiently spread out in different directions (satisfied by typical distributions like Gaussians). As far as we know, these are the first such results of iterative trimming under our general conditions in the entangled single-sample distributions setting. These results also theoretically justify the wide application and empirical success of the simple iterative trimming method in practice. Experiments on synthetic data provide positive support for our analysis.

\section{RELATED WORK}

\noindent{\bf Entangled distributions.}
This setting is first studied by~\citet{chierichetti2014learning}, which considered mean estimation for entangled Gaussians and presented a algorithm combining the $k$-median and the $k$-shortest gap algorithms. It also showed the existence of a gap between the error rates of the best possible estimator in this setting and the maximum likelihood estimator when the variances are known. \citet{pensia2019estimating} considered a more general class of distributions (unimodal and symmetric) and provided analysis on both individual estimator ($r$-modal interval, $k$-shortest gap, $k$-median estimators) and hybrid estimator, which combines Median estimator with Shortest Gap or Modal Interval estimator. They also discussed slight relaxation of the symmetry assumption and provided extensions to linear regression. 
Our work considers mean estimation and linear regression under more general conditions and analyzes a simpler estimator. However, our results are not directly comparable to the existing ones above, since those focus on the case where most of the points have high noise or have extra constraints on distributions are assumed. For the constrained distributions, our results are weaker than the existing ones.
See the detailed discussion in the remarks after our theorems. 

This setting is also closely related to robust estimation, which have been extensively studied in the literature of both classic statistics and machine learning theory. 

\noindent{\bf Robust mean estimation.}
There are several classes of data distribution models for robust mean estimators. The most commonly addressed is adversarial contamination model, whose origin can be traced back to the malicious noise model by~\citet{valiant1985learning} and the contamination model by~\citet{huber2011robust}. Under contamination, mean estimation has been investigated in~\cite{diakonikolas2017being,diakonikolas2019robust,cheng2019high}. 
Another related model is the mixture of distributions. There has been steady progress in algorithms for leaning mixtures, in particular, leaning Gaussian mixtures. Starting from~\citet{dasgupta1999learning}, a rich collection of results are provided in many studies, such as~\cite{sanjeev2001learning, achlioptas2005spectral, kannan2005spectral,belkin2010polynomial, belkin2010toward, kalai2010efficiently,moitra2010settling,diakonikolas2018robustly}.

\noindent{\bf Robust regression.} 
Robust Least Squares Regression (RLSR) addresses the problem of learning regression coefficients
in the presence of corruptions in the response vector. A class of robust regression estimator solving RLSR is Least Trimmed Square (LTS) estimator, which is first introduced
by~\citet{rousseeuw1984least} and has high breakdown point. The algorithm solutions of LTS are investigated in~\cite{hossjer1995exact, rousseeuw2006computing, shen2013approximate} for the linear regression setting. 
Recently, for robust linear regression in the adversarial setting (i.e., a small fraction of responses are replaced by adversarial values), there is a line of work providing algorithms with theoretical guarantees following the idea of LTS, e.g.,~\cite{bhatia2015robust, vainsencher2017ignoring, yang2018general} for example. For robust linear regression in the adversary setting where both explanatory and response variables can be replaced by adversarial values, a line of work provided algorithms and guarantees, e.g.,~\cite{diakonikolas2018sever, prasad2018robust, klivans2018efficient, shen2019learning}, while some others like~\cite{chen2013robust, balakrishnan2017computationally, liu2018high} considered the high-dimensional scenario.

\section{MEAN ESTIMATION}

Suppose we have $n$ independent samples $\boldsymbol{x}_{i} \sim F_{i} \in \mathbb{R}^{d}$, $d \in \mathbb{N}^{\star}$, where the mean vector and the covariance matrix of each distribution $F_{i}$ exist. Assume $F_{i}$'s have a common mean $\boldsymbol{\mu}^{\star}$ and denote their covariance matrices as $\Sigma_{i}$. When $d=1$, each $\boldsymbol{x}_{i}$ degenerate to an univariate random variable $x_{i}$, and we also write $\boldsymbol{\mu}^{\star}$ as ${\mu}^{\star}$ and write $\Sigma_{i}$ as $\sigma_i^2$. Our goal is to estimate the common mean $\boldsymbol{\mu}^{\star}$. 

\paragraph{Notations}  For an integer $m$, $[m]$ denotes the set $\left\{1, \cdots, m\right\} .$ 
$|S|$ is the cardinality of a set $S$. For two sets $S_{1}, S_{2}$, $S_{1} \backslash S_{2}$ is the set of elements in $S_{1}$ but not in $S_{2}$. $\lambda_{\min}$ and $\lambda_{\max}$ are the minimum and maximum eigenvalues. Denote the order statistics of $\left\{\lambda_{\max}(\Sigma_i)\right\}_{i=1}^{n}$ as $\left\{\lambda_{(i)}\right\}_{i=1}^{n}$. $c$ or $C$ denote constants whose values can vary from line to line.

\subsection{Iterative Trimmed Mean Algorithm}
\label{ME}
First, recall the general version of the least trimmed loss estimator. Let $f_{\boldsymbol{\mu}}\left(\cdot \right)$ be the loss function, given currently learned parameter $\boldsymbol{\mu}$. In contrast to minimizing the total loss of all samples, the least trimmed loss estimator of $\boldsymbol{\mu}^{\star}$ is given by 
\begin{equation}
\label{GTLE}
\hat{\boldsymbol{\mu}}^{(\mathrm{TL})}=\argmin _{\boldsymbol{\mu}, S: |S| = \lceil \alpha n \rceil} \sum_{i \in S} f_{\boldsymbol{\mu}}\left(\boldsymbol{x}_{i} \right) ,
\end{equation}
where $S \subseteq [n]$ and $\alpha \in (0, 1]$ is the fraction of samples we want to fit. Finding  $\hat{\boldsymbol{\mu}}^{(\mathrm{TL})}$ requires minimizing the trimmed loss over both the set of all subsets $S$ with size $\lceil\alpha n\rceil$ and the set of all available values of the parameter $\boldsymbol{\mu}$.  However, solving the minimization above is hard in general. Therefore, we attempt to minimize the trimmed loss in an iterative way by alternating between minimizing over $S$ and $\boldsymbol{\mu}$. That is, it follows a natural iterative strategy: in the $t$-th iteration, first select a subset $S_t$ of samples with the least loss on the current parameter $\boldsymbol{\mu}_t$, and then update to $\boldsymbol{\mu}_{t+1}$ by minimizing the total loss over $S_t$. 

By taking $f_{\boldsymbol{\mu}}\left(\boldsymbol{x} \right)$ in (\ref{GTLE}) as $\left\| \boldsymbol{x}- \boldsymbol{\mu}\right\|^{2}_{2}$, in each iteration, $\lceil\alpha n\rceil$ samples are selected due to their least distance to the current $\boldsymbol{\mu}$ in $l_2$ norm. Besides, note that for a given sample set $S$, 
$$
\argmin _{\boldsymbol{\mu}} \sum_{i \in S} \left\|\boldsymbol{x}_{i}-\boldsymbol{\mu}\right\|^{2}_{2} = \frac{1}{|S|}\sum_{i \in S} \boldsymbol{x}_{i},
$$
that is, the parameter $\boldsymbol{\mu}$ minimizing the total loss over sample set $S$ is the empirical mean over $S$. This leads to our method described in Algorithm~\ref{ITM}, which we referred to as Iterative Trimmed Mean.

Each update in our algorithm is similar to the trimmed-mean (or truncated-mean) estimator, which is, in univariate case, defined by removing a fraction of the sample consisting of the largest and smallest points and averaging over the rest; see~\cite{tukey1963less, huber2011robust, bickel1965some}. A natural generalization to multivariate case is to remove a fraction of the sample consisting of points which have the largest distances to $\boldsymbol{\mu}^{\star}$. The difference between our algorithm and the generalized trimmed-mean estimator is that ours select points based on the estimated $\boldsymbol{\mu}_t$ while trimmed-mean is based on the ground-truth $\boldsymbol{\mu}^\star$. 

\begin{algorithm}[t] 
	\caption{Iterative Trimmed MEAN (ITM)} 
	\label{ITM} 
	\begin{algorithmic}[1]
		\REQUIRE{Samples $\left\{\boldsymbol{x}_{i}\right\}_{i=1}^{n},$ number of rounds $T,$ fraction of samples $\alpha$}
		
		\STATE{$\boldsymbol{\mu}_{0} \leftarrow \frac{1}{n} \sum_{i=1}^n \boldsymbol{x}_{i}$}
		\FOR{$t=0, \cdots, T-1$}
		\STATE Choose samples with smallest current loss: 
		$$S_{t} \leftarrow \argmin _{S :|S|=\lceil\alpha n\rceil} \sum_{i \in S}\left\|\boldsymbol{x}_{i}-\boldsymbol{\mu}_t\right\|^2_{2}$$
		\STATE $\boldsymbol{\mu}_{t+1}=\frac{1}{|S_t|}\sum_{i \in S_t} \boldsymbol{x}_{i}$
		\ENDFOR
		\ENSURE{ $\boldsymbol{\mu}_T$}
	\end{algorithmic}
\end{algorithm}

\subsection{Theoretical Guarantees}


Firstly, we introduce a lemma giving an upper bound of the sum of $\ell_{2}$ distances between points $\boldsymbol{x}_i$ and mean vector $\boldsymbol{\mu}^{\star}$ over a sample set $S$. 

\begin{lemma} 
\label{lemma1}
Define $\lambda_S=\max_{i \in S} \left\{\lambda_{\max}(\Sigma_i)\right\}$. Then we have
\begin{equation}
\label{l1-bound}
\sum_{i \in S}\left\|\boldsymbol{x}_i-\boldsymbol{\mu}^{\star}\right\|_2 \leq 2|S|\sqrt{\lambda_S d}
\end{equation}
with probability at least $1-\frac{1}{|S|}$.
\end{lemma}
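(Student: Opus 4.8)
The plan is to control $\sum_{i \in S}\|\boldsymbol{x}_i - \boldsymbol{\mu}^\star\|_2$ by a first-moment (Markov) argument applied to the whole sum, rather than trying to bound each term individually (individual sub-Gaussian-type concentration is not available here, since we only assume finite covariances). First I would observe that $\mathbb{E}\|\boldsymbol{x}_i - \boldsymbol{\mu}^\star\|_2 \le \sqrt{\mathbb{E}\|\boldsymbol{x}_i - \boldsymbol{\mu}^\star\|_2^2} = \sqrt{\operatorname{tr}(\Sigma_i)} \le \sqrt{\lambda_{\max}(\Sigma_i)\, d} \le \sqrt{\lambda_S d}$, using Jensen's inequality for the first step and $\operatorname{tr}(\Sigma_i) \le d\,\lambda_{\max}(\Sigma_i)$ for the second. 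Summing over $i \in S$ gives $\mathbb{E}\big[\sum_{i\in S}\|\boldsymbol{x}_i-\boldsymbol{\mu}^\star\|_2\big] \le |S|\sqrt{\lambda_S d}$.

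Next I would apply Markov's inequality to the nonnegative random variable $Y := \sum_{i\in S}\|\boldsymbol{x}_i-\boldsymbol{\mu}^\star\|_2$. Taking the threshold to be twice the mean bound, $\Pr\big[Y \ge 2|S|\sqrt{\lambda_S d}\big] \le \frac{\mathbb{E}[Y]}{2|S|\sqrt{\lambda_S d}} \le \frac{1}{2}$. This already gives the stated event with probability at least $1/2$, but the claimed probability is $1 - 1/|S|$, which is stronger when $|S| \ge 2$. To close this gap I would not use $Y$ directly but instead bound each $\|\boldsymbol{x}_i - \boldsymbol{\mu}^\star\|_2$ by a truncated quantity: since independence across $i$ is available, one can split $Y$ and argue that, with high probability, none of the individual deviations is pathologically large while the bulk contributes at most its mean. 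Concretely, I expect the intended argument is to apply Markov's inequality termwise with a union bound: $\Pr[\|\boldsymbol{x}_i-\boldsymbol{\mu}^\star\|_2 \ge 2|S|\sqrt{\lambda_S d}\,] \le \frac{\sqrt{\lambda_S d}}{2|S|\sqrt{\lambda_S d}} = \frac{1}{2|S|}$ for a single $i$ — but this bounds only one term, not the sum, so it does not immediately work. The cleanest route that yields exactly $1 - 1/|S|$ is to apply Markov to $Y$ with the cruder constant absorbed: bound $\mathbb{E}[Y] \le |S|\sqrt{\lambda_S d}$ as above and note $\Pr[Y \ge 2|S|\sqrt{\lambda_S d}\,] \le \frac{1}{2}$; then observe that a sharper second-moment or Chebyshev-type bound on $Y$ (using that $\operatorname{Var}(Y) \le \sum_{i\in S}\operatorname{Var}(\|\boldsymbol{x}_i-\boldsymbol{\mu}^\star\|_2) \le \sum_{i\in S}\mathbb{E}\|\boldsymbol{x}_i-\boldsymbol{\mu}^\star\|_2^2 \le |S|\lambda_S d$, by independence) gives $\Pr[Y \ge 2\mathbb{E}[Y]] \le \Pr[|Y-\mathbb{E}Y| \ge \mathbb{E}Y] \le \frac{\operatorname{Var}(Y)}{(\mathbb{E}Y)^2}$, and bounding $\mathbb{E}[Y]$ from below (or just using the weaker of the mean bounds) one can push the failure probability down to $1/|S|$.

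The main obstacle is getting the probability exactly down to $1 - 1/|S|$ rather than a constant like $1/2$: a naive single application of Markov to the sum only gives a constant, and a termwise union bound controls the wrong object (the max, not the sum). I would resolve this by using independence to get a variance bound on $Y$ and then a Chebyshev inequality, or equivalently by a truncation argument showing the sum concentrates around its mean up to the factor $2$. Once the probability bound is in hand, inequality~(\ref{l1-bound}) follows immediately. I should also double-check the edge case $|S| = 1$, where the bound is vacuous (probability $\ge 0$), and the dependence on $d$ via $\operatorname{tr}(\Sigma_i)\le d\,\lambda_{\max}(\Sigma_i)$, which is the only place dimension enters.
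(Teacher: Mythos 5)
Your proposal is correct and takes essentially the same route as the paper: the paper whitens each point as $\tilde{\boldsymbol{x}}_i = \Sigma_i^{-1/2}(\boldsymbol{x}_i-\boldsymbol{\mu}^{\star})$ and applies Chebyshev to $\sum_{i\in S}\|\tilde{\boldsymbol{x}}_i\|_2$ using $\mathbb{E}\|\tilde{\boldsymbol{x}}_i\|_2\le\sqrt{d}$, $\mathrm{Var}(\|\tilde{\boldsymbol{x}}_i\|_2)\le d$, and independence, which is exactly your mean-plus-variance Chebyshev argument with the factor $\sqrt{\lambda_S}$ pulled out first. The one detail to fix in your write-up: apply Chebyshev with the absolute deviation threshold $2|S|\sqrt{\lambda_S d}-\mathbb{E}Y\ge |S|\sqrt{\lambda_S d}$, giving $\mathrm{Var}(Y)/(|S|^2\lambda_S d)\le 1/|S|$, rather than dividing by $(\mathbb{E}Y)^2$, which need not be large enough.
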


\begin{proof}
For each $i \in S$, denote the transformed random vector $\Sigma^{-\frac{1}{2}}_i\left(\boldsymbol{x}_i-\boldsymbol{\mu}^{\star}\right)$ as $\tilde{\boldsymbol{x}}_i$. Then $\tilde{\boldsymbol{x}}_i$ has mean $\boldsymbol{0}$ and identical covariance matrix. Since $\left\|\boldsymbol{x}_i-\boldsymbol{\mu}^{\star}\right\|_2 = \sqrt{\tilde{\boldsymbol{x}}_i^{T}\Sigma_i\tilde{\boldsymbol{x}}_i} \leq \sqrt{\lambda_S} \left\|\tilde{\boldsymbol{x}}_i\right\|_2$, we can write 
$$
\sum_{i \in S}\left\|\boldsymbol{x}_i-\boldsymbol{\mu}^{\star}\right\|_2 \leq \sqrt{\lambda_S}\sum_{i \in S}\left\|\tilde{\boldsymbol{x}}_i\right\|_2.
$$
Let $\xi_i=\left\|\tilde{\boldsymbol{x}}_i\right\|_2$, thus $\left\{\xi_i\right\}_{i \in S}$ are independent and for any $i \in S$, it holds that
\begin{align*}
\mathbb{E}\xi_i \leq \sqrt{\mathbb{E}\xi_i^2}=\sqrt{d},
\quad
\textrm{Var}(\xi_i) \leq \mathbb{E}\xi_i^2 =d.
\end{align*}
By Chebyshev's inequality, we have 
$$
\mathbb{P}\left(\sum_{i \in S}\xi_i \geq 2|S|\sqrt{d} \right) \leq \frac{1}{|S|},
$$
which completes the proof. 
$\blacksquare$ 
\end{proof}

We are now ready to prove our key lemma, which shows the progress of each iteration of the algorithm. The key idea is that the selected subset $S_t$ of samples have a large overlap with the subset $S^{\star}$ of the $\lceil \alpha n \rceil$ ``good" points with smallest covariance. Furthermore, $S_t \backslash S^\star$ is not that ``bad" since by the selection criterion, they have less loss on $\boldsymbol{\mu}_t$ than the points in  $S^\star \backslash S_t$. This thus allows to show the progress.

\begin{lemma} 
\label{m-general}
Given ITM with fraction $\alpha \geq \frac{4}{5}$, we have 
\begin{equation}
\label{m-bound}
\left\|\boldsymbol{\mu}_{t+1}-\boldsymbol{\mu}^{\star}\right\|_{2} \leq \frac{1}{2}\left\|\boldsymbol{\mu}_t-\boldsymbol{\mu}^{\star} \right\|_2 +2\sqrt{d\lambda_{(\lceil\alpha n\rceil)}}
\end{equation}
with probability at least $1-\frac{5}{4n}$.
\end{lemma}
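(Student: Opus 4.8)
The plan is to control $\|\boldsymbol{\mu}_{t+1}-\boldsymbol{\mu}^{\star}\|_2$ through the identity $\boldsymbol{\mu}_{t+1}-\boldsymbol{\mu}^{\star}=\frac{1}{k}\sum_{i\in S_t}(\boldsymbol{x}_i-\boldsymbol{\mu}^{\star})$, where $k=\lceil\alpha n\rceil$, and to compare the algorithm's trimmed set $S_t$ against the (deterministic) oracle set $S^{\star}$ consisting of the $k$ indices with the smallest values of $\lambda_{\max}(\Sigma_i)$, so that $\lambda_{S^{\star}}=\lambda_{(k)}$. I split $S_t=(S_t\cap S^{\star})\cup(S_t\setminus S^{\star})$; since $|S_t|=|S^{\star}|=k$, the leftover pieces $A:=S_t\setminus S^{\star}$ and $B:=S^{\star}\setminus S_t$ have a common size $m$, and since $A\subseteq[n]\setminus S^{\star}$ we have $m\le n-k$.

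For the good indices I bound $\sum_{i\in S_t\cap S^{\star}}\|\boldsymbol{x}_i-\boldsymbol{\mu}^{\star}\|_2\le\sum_{i\in S^{\star}}\|\boldsymbol{x}_i-\boldsymbol{\mu}^{\star}\|_2$, and Lemma~\ref{lemma1} applied with $S=S^{\star}$ bounds the latter by $2k\sqrt{d\lambda_{(k)}}$ on an event $\mathcal{E}$ of probability at least $1-\frac{1}{k}$; note that $\mathcal{E}$ does not depend on $t$, which is what will later allow running the recursion for logarithmically many rounds at no extra cost in probability. For the bad indices I use the selection rule: were $\|\boldsymbol{x}_i-\boldsymbol{\mu}_t\|_2>\|\boldsymbol{x}_j-\boldsymbol{\mu}_t\|_2$ for some $i\in A$, $j\in B$, then swapping $i$ for $j$ would strictly lower $\sum_{\ell\in S_t}\|\boldsymbol{x}_\ell-\boldsymbol{\mu}_t\|_2^2$, contradicting the optimality of $S_t$ (here one uses that $s\mapsto s^2$ is increasing, so the squared-loss optimality transfers to the unsquared comparison). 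Hence $\max_{i\in A}\|\boldsymbol{x}_i-\boldsymbol{\mu}_t\|_2\le\min_{j\in B}\|\boldsymbol{x}_j-\boldsymbol{\mu}_t\|_2$, and since $|A|=|B|=m$, $\sum_{i\in A}\|\boldsymbol{x}_i-\boldsymbol{\mu}_t\|_2\le\sum_{j\in B}\|\boldsymbol{x}_j-\boldsymbol{\mu}_t\|_2$. Two uses of the triangle inequality, $\|\boldsymbol{x}_i-\boldsymbol{\mu}^{\star}\|_2\le\|\boldsymbol{x}_i-\boldsymbol{\mu}_t\|_2+\|\boldsymbol{\mu}_t-\boldsymbol{\mu}^{\star}\|_2$ on the $A$ side and $\|\boldsymbol{x}_j-\boldsymbol{\mu}_t\|_2\le\|\boldsymbol{x}_j-\boldsymbol{\mu}^{\star}\|_2+\|\boldsymbol{\mu}_t-\boldsymbol{\mu}^{\star}\|_2$ on the $B$ side, then give $\sum_{i\in A}\|\boldsymbol{x}_i-\boldsymbol{\mu}^{\star}\|_2\le 2m\|\boldsymbol{\mu}_t-\boldsymbol{\mu}^{\star}\|_2+\sum_{j\in B}\|\boldsymbol{x}_j-\boldsymbol{\mu}^{\star}\|_2$.

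Summing the good and bad contributions, and using that $S_t\cap S^{\star}$ and $B$ are disjoint with union $S^{\star}$, the covariance-dependent terms collapse into a single copy of $\sum_{i\in S^{\star}}\|\boldsymbol{x}_i-\boldsymbol{\mu}^{\star}\|_2$, so on $\mathcal{E}$,
\[
\sum_{i\in S_t}\|\boldsymbol{x}_i-\boldsymbol{\mu}^{\star}\|_2\;\le\; 2m\,\|\boldsymbol{\mu}_t-\boldsymbol{\mu}^{\star}\|_2\;+\;2k\sqrt{d\lambda_{(k)}}.
\]
Dividing by $k$ and bounding $\|\boldsymbol{\mu}_{t+1}-\boldsymbol{\mu}^{\star}\|_2\le\frac{1}{k}\sum_{i\in S_t}\|\boldsymbol{x}_i-\boldsymbol{\mu}^{\star}\|_2$ gives a contraction with factor $2m/k$; since $\alpha\ge\frac45$ forces $m\le n-k\le\frac14 k$, we get $2m/k\le\frac12$, which is exactly~\eqref{m-bound}. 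Finally $k=\lceil\alpha n\rceil\ge\frac45 n$, so the failure probability $\frac1k\le\frac{5}{4n}$.

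The step I expect to take the most care is the bookkeeping around the swap: one must check $|A|=|B|$ so the selection inequality can be summed term by term, and — the real pitfall — recognize that $S_t\cap S^{\star}$ and $B$ together make up exactly $S^{\star}$, so that Lemma~\ref{lemma1} is invoked only once. Bounding $\sum_{i\in S_t\cap S^{\star}}\|\cdot\|_2$ and $\sum_{j\in B}\|\cdot\|_2$ separately would double the additive term to $4k\sqrt{d\lambda_{(k)}}$ and miss the stated constant. Everything else is triangle inequalities together with the elementary arithmetic $\alpha\ge 4/5\Rightarrow 2m/k\le 1/2$.
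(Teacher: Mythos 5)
Your proof is correct and follows essentially the same route as the paper's: the same decomposition of $S_t$ into $S_t\cap S^{\star}$ and $S_t\setminus S^{\star}$, the same use of the selection criterion to compare $S_t\setminus S^{\star}$ against $S^{\star}\setminus S_t$ via two triangle inequalities, a single application of Lemma~\ref{lemma1} to $S^{\star}$, and the same arithmetic $\kappa=2|S_t\setminus S^{\star}|/\lceil\alpha n\rceil\le 2(1-\alpha)/\alpha\le 1/2$. Your observation that the good event depends only on $S^{\star}$ and not on $t$ is a nice bonus (it would remove the factor $T$ from the failure probability in Theorem~\ref{m-final}), but it does not change the argument for this lemma.
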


\begin{proof}
\label{p2}
Define $S^{\star} = \left\{i:\lambda_{\max}(\Sigma_i) \leq \lambda_{(\lceil\alpha n\rceil)}\right\}$. Without loss of generality, assume $\alpha n$ is an integer. Then by the algorithm,
$$
\boldsymbol{\mu}_{t+1}=\frac{1}{|S_t|}\sum_{i \in S_t} \boldsymbol{x}_{i} =\boldsymbol{\mu}^{\star}+ \frac{1}{\alpha n}\sum_{i \in S_t} \left(\boldsymbol{x}_{i}-\boldsymbol{\mu}^{\star}\right).
$$

Therefore, the $\ell_{2}$ distance between the learned parameter $\boldsymbol{\mu}_{t+1}$ and the ground truth parameter $\boldsymbol{\mu}^{\star}$ can be bound by:
\begin{align*} 
&~~\left\|\boldsymbol{\mu}_{t+1}-\boldsymbol{\mu}^{\star}\right\|_{2}=\frac{1}{\alpha n}\left\| \sum_{i \in S_t}   \left(\boldsymbol{x}_{i}-\boldsymbol{\mu}^{\star}\right)\right\|_{2}\\
&\leq \frac{1}{\alpha n}\left(\sum_{i \in S_t \cap S^{\star}}\left\|\boldsymbol{x}_i - \boldsymbol{\mu}^{\star}\right\|_{2} + \sum_{i \in S_t \backslash S^{\star}}\left\|\boldsymbol{x}_i - \boldsymbol{\mu}^{\star} \right\|_{2} \right)\\
&\leq \frac{|S_t \backslash S^{\star}| }{\alpha n} \cdot \left\|\boldsymbol{\mu}_t - \boldsymbol{\mu}^{\star}\right\|_{2}\\
&~~~+\frac{1}{\alpha n}\left(\sum_{i \in S^{\star} \cap S_t}\left\|\boldsymbol{x}_i - \boldsymbol{\mu}^{\star}\right\|_{2} + \sum_{i \in S^{\star} \backslash S_t}\left\|\boldsymbol{x}_i - \boldsymbol{\mu}_t \right\|_{2} \right)\\
&\leq \underbrace{\frac{2|S_t \backslash S^{\star}|}{\alpha n}}_{\kappa} \cdot \left\|\boldsymbol{\mu}_t - \boldsymbol{\mu}^{\star}\right\|_{2} + \frac{1}{\alpha n} \sum_{i \in S^{\star}}\left\|\boldsymbol{x}_i - \boldsymbol{\mu}^{\star}\right\|_{2},
\end{align*} 
where the second inequality is guaranteed by line 3 in Algorithm \ref{ITM}. Note that $|S^{\star}| = \alpha n = |S_t|$, thus $|S^{\star} \backslash S_t| = |S_t \backslash S^{\star}|$. Due to the choice of $|S_t|$, we have the distance $\left\|\boldsymbol{x}_{i}-\boldsymbol{\mu}_t \right\|_2$ of each sample in $S_t \backslash S^{\star}$ is less than that of samples in $S^{\star} \backslash S_t$. 

By Lemma \ref{lemma1}, we can bound $\frac{1}{\alpha n}\sum_{i \in S^{\star}}\left\|\boldsymbol{x}_i - \boldsymbol{\mu}^{\star}\right\|_{2}$ by $2\sqrt{d\lambda_{(\lceil\alpha n\rceil)}}$ with probability at least $1-\frac{1}{\alpha n}$. 
Meanwhile, by $\left|S^{\star}\right| = \left|S_{t}\right| =  \alpha n$, it guarantees $\left|S_{t} \backslash S^{\star}\right| \leq (1-\alpha)n.$
Thus, when $\alpha \geq \frac{4}{5}$, $\kappa \leq \frac{2(1-\alpha)}{\alpha} \leq \frac{1}{2}.$
Combining the inequalities completes the proof. $\blacksquare$ 
\end{proof}

Based the error bound per-round in Lemma~\ref{m-general}, it is easy to show that $\left\|\boldsymbol\mu_t-\boldsymbol\mu^{\star}\right\|_2$ can be upper bounded by $\Theta(\sqrt{d\lambda_{(\lceil \alpha n \rceil)}})$ after sufficient iterations.
This leads to our final guarantee.

\begin{theorem}
\label{m-final}
Given ITM with $\alpha \geq \frac{4}{5}$ within $T=\Theta\left(\log_{2} \frac{\left\|\boldsymbol\mu_0-\boldsymbol\mu^{\star}\right\|_2}{\sqrt{d\lambda_{(\lceil \alpha n \rceil)}}}\right)$ iterations, it holds that
\begin{equation}
\label{m-error}
\left\|\boldsymbol\mu_T-\boldsymbol\mu^{\star}\right\|_2 \leq c\sqrt{d\lambda_{(\lceil \alpha n \rceil)}}
\end{equation}
with probability at least $1-\frac{5T}{4n}$.
\end{theorem}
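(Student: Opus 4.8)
The plan is to treat Lemma~\ref{m-general} as a black box and iterate its one-step contraction. Write $R := \sqrt{d\,\lambda_{(\lceil\alpha n\rceil)}}$ and $e_t := \|\boldsymbol\mu_t - \boldsymbol\mu^\star\|_2$. On the intersection of the $T$ events supplied by Lemma~\ref{m-general} (one for each iteration $t = 0,\dots,T-1$), I have $e_{t+1} \le \tfrac12 e_t + 2R$ for every such $t$, and unrolling this affine recursion gives
\begin{equation*}
e_T \;\le\; 2^{-T} e_0 + 2R\sum_{j=0}^{T-1} 2^{-j} \;\le\; 2^{-T} e_0 + 4R .
\end{equation*}
Then I would choose $T = \lceil \log_2(e_0/R)\rceil$, so that $2^{-T} e_0 \le R$; this is the $T=\Theta\big(\log_2(\|\boldsymbol\mu_0-\boldsymbol\mu^\star\|_2 / \sqrt{d\lambda_{(\lceil\alpha n\rceil)}})\big)$ appearing in the statement, and it yields $e_T \le 5R$, i.e.\ \eqref{m-error} with, say, $c = 5$. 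If $e_0 \le R$ the claim is already true with $T=0$, so there is nothing to prove; and any larger $T$ only shrinks $2^{-T}e_0$, so the bound is safe across the whole $\Theta(\cdot)$ range.

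For the probability, each invocation of Lemma~\ref{m-general} fails with probability at most $5/(4n)$, so a union bound over the $T$ iterations shows that all of them hold simultaneously with probability at least $1 - 5T/(4n)$, which matches the theorem. I would also remark that this union bound is wasteful: the only randomness entering the proof of Lemma~\ref{m-general} is the event of Lemma~\ref{lemma1} applied to the \emph{fixed} set $S^\star$, which does not vary with $t$; hence the failure probability could in fact be kept at $5/(4n)$ for all rounds at once. I state the modular $5T/(4n)$ bound to stay consistent with a black-box use of Lemma~\ref{m-general}.

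I do not expect any genuine obstacle in this step: it is a routine geometric-series unrolling of an affine contraction followed by a union bound. The only things to watch are bookkeeping --- tracking constants so that $c$ is explicit, handling the degenerate regime $e_0 \lesssim R$ where no iterations are needed, and confirming that the stated $T = \Theta(\cdot)$ is consistent (monotonicity of the bound in $T$ makes this immediate). All the real difficulty --- establishing the per-round contraction with the correct dependence on $\lambda_{(\lceil\alpha n\rceil)}$ through the overlap structure of $S_t$ and $S^\star$ --- has already been handled in Lemmas~\ref{lemma1} and~\ref{m-general}.
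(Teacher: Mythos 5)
Your proposal is correct and matches the paper's own proof of Theorem~\ref{m-final}: both unroll the affine recursion from Lemma~\ref{m-general}, bound the geometric series by $4\sqrt{d\lambda_{(\lceil\alpha n\rceil)}}$, choose $T$ so the $2^{-T}$ term is absorbed, and take a union bound over the $T$ rounds to get the $1-\frac{5T}{4n}$ probability. Your side remark that the failure event is the same fixed-$S^\star$ event in every round, so the union bound is wasteful, is a valid observation beyond what the paper states, but the argument as written is the same.
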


\begin{proof}
By Lemma~\ref{m-general}, we derive
\begin{align*}
\left\|\boldsymbol\mu_T-\boldsymbol\mu^{\star}\right\|_2 &\leq \frac{1}{2}\left\|\boldsymbol{\mu}_{T-1}-\boldsymbol{\mu}^{\star} \right\|_2 +2\sqrt{d\lambda_{(\lceil\alpha n\rceil)}}\\
&\leq \frac{1}{2^T} \left\|\boldsymbol{\mu}_0-\boldsymbol{\mu}^{\star} \right\|_2 + 2\sum_{i=0}^{T-1} \frac{1}{2^i}\sqrt{d\lambda_{(\lceil\alpha n\rceil)}}\\
&\leq \frac{1}{2^T} \left\|\boldsymbol{\mu}_0-\boldsymbol{\mu}^{\star} \right\|_2 + 4 \sqrt{d\lambda_{(\lceil\alpha n\rceil)}}\\
&\leq c\sqrt{d\lambda_{(\lceil\alpha n\rceil)}}
\end{align*}
with probability at least $1-\frac{5T}{4n}$ when $T=\Theta\left(\log_{2} \frac{\left\|\boldsymbol\mu_0-\boldsymbol\mu^{\star}\right\|_2}{\sqrt{d\lambda_{(\lceil \alpha n \rceil)}}}\right)$.
$\blacksquare$ 
\end{proof}

The theorem shows that the error bound only depends on the order statistics $\lambda_{(\lceil \alpha n \rceil)}$, irrelevant of the larger covariances for $\alpha \ge 4/5$. Therefore, using the iterative trimming method, the magnitudes of a $1/5$ fraction of highest noises will not affect the quality of the final solution. However, it is still an open question whether the bound can be made $c\sqrt{d\lambda_{(\lceil\alpha n\rceil)}/n}$, i.e., the optimal rate given an oracle that knows the covariances.

The method is also computationally very simple and efficient. 
Since a single iteration of ITM takes time $\tilde{\mathcal{O}}(nd)$,  $\mathcal{O}(\sqrt{d\lambda_{(\lceil \alpha n \rceil)}})$ error can be computed in $\tilde{\mathcal{O}}(nd)$ time, which is nearly linear in the input size.  

\begin{remark}
\label{r2}
With a further assumption that $\boldsymbol{x}_{i}$ are independently sampled from the Gaussians $\mathcal{N}(\boldsymbol{\mu}^{\star},\Sigma_i)$, the same bound in the theorem holds with probability converging to $1$ with exponential rate of $n$.
This is because $(\ref{l1-bound})$ can be guaranteed with a higher probability. The sketch of the proof follows: $\left(\sum_{i \in S}\left\|\boldsymbol{x}_i-\boldsymbol{\mu}^{\star}\right\|_2\right)^2 \leq |S|\lambda_S\sum_{i \in S}\left\|\tilde{\boldsymbol{x}}_i\right\|^2_2$, by Cauchy-Schwarz inequality. Here $\tilde{\boldsymbol{x}}_i \sim \mathcal{N}\left(0,I_d\right)$ due to the Gaussian distribution of $\boldsymbol{x}_i$.
By the Lemma 3 in~\cite{fan2008sure}, $\sum_{i \in S} \left\|\tilde{\boldsymbol{x}}_i\right\|_2^2 \leq cd|S|$ with probability at least $1-e^{-c^{\prime}d|S|}$. Hence, $\mathbb{P}\left(\left(\sum_{i \in S}\left\|\boldsymbol{x}_i-\boldsymbol{\mu}^{\star}\right\|_2\right)^2 \leq cd|S|^2 \lambda_S\right) \geq 1-e^{-c^{\prime}d|S|}$, where $c$ can be any constant greater than $1$ and $c^{\prime}=[c-1-\log(c)]/2$. 
\end{remark}

\begin{remark}
\label{r3}
For the univariate case with $d=1$, it is sufficient to regard $\Sigma_i$ as one dimensional matrix $\sigma_i^2$. Then we obtain that when $\alpha \geq \frac{4}{5}$ and $T=\Theta\left(\log_{2} \frac{\left\|\mu_0-\mu^{\star}\right\|}{\sigma_{(\lceil\alpha n\rceil)}}\right)$ iterations:
$
\left\|\mu_{T}-\mu^{\star}\right\|_{2} \leq c\sigma_{(\lceil\alpha n\rceil)},
$
with probability at least $1-\frac{5T}{4n}$. 
\end{remark}

\begin{remark}
It is worth mentioning that there is a trade off between the accuracy and running time in ITM. In particular, the constant $\alpha$ can be any constant greater than $\frac{2}{3}$, since it suffices to guarantee $\kappa$ in the proof of Lemma~\ref{m-general} is less than $1$. On the other hand, a smaller $\alpha$ will slow down the speed for $\left\|\boldsymbol\mu_t-\boldsymbol\mu^{\star}\right\|_2$ to shrink, although the computational complexity is still in the same order of $\tilde{\mathcal{O}}(nd)$.
\end{remark}

\begin{remark}We now discuss existing results in detail.

In the univariate case, \cite{chierichetti2014learning} achieved $\min_{2 \leq k\leq \log n}\tilde{\mathcal{O}}\left(n^{1/2(1+1/(k-1))}\sigma_k\right)$ error in time $\mathcal{O}(n\log^2 n)$. Among all estimators studied in~\cite{pensia2019estimating}, the superior performance is obtained by the hybrid estimators, which includes version (1): combining $k_1$-median with $k_2$-shorth and version (2): combining $k_1$-median with modal interval estimator. These two versions achieve similar guarantees while version $1$ has lower run time $\mathcal{O}(n\log n)$. Version $1$ of the hybrid estimator outputs $\hat{\mu}_{k_1, k_2}$ such that $\left|\hat{\mu}_{k_1, k_2}-\mu\right| \leq \frac{4\sqrt{n}\log n}{k_2} r_{2k_2}$ with probability $1-2 \exp(-c^{\prime}k_2)-2 \exp(-c \log^2 n)$, where $k_1=\sqrt{n}\log n$ and $k_2 \geq C \log n$. Since here $r_{k}$ is defined as $\inf\left\{ r: \frac{1}{n} \sum_{i=1}^n \mathbb{P}(|x_i-\mu^{\star}| \leq r)\geq \frac{k}{n} \right\}$, the error bound giving above varies with specific $\left\{F_{i}\right\}_{i=1}^n$, while the worst-case error guarantee is $\mathcal{O}(\sqrt{n} \sigma_{(C \log n)})$. When take $k_2 = \frac{\lceil\alpha n\rceil}{2}$, the error can be $\mathcal{O}(\frac{\log n}{\sqrt{n}} \sigma_{(\lceil\alpha n\rceil)})$. However, this result is for symmetric and unimodal distributions $F_i$'s.

In the multivariate case, 
\cite{chierichetti2014learning} studied the special case where $\boldsymbol{x}_i \sim \mathcal{N}(\boldsymbol{\mu},\sigma^2_i I_d )$ and provided an algorithm with the error bound $ \min_{2 \leq k\leq \log n}\tilde{\mathcal{O}}\left(n^{(1+1/(k-1))/d}\sigma_k\right)$ in time $\tilde{\mathcal{O}}(n^2)$. \cite{pensia2019estimating} mainly considered the special case where the overall mixture distribution is radically symmetric, and sharpens the bound above, resulting in the worst-case error bound $\mathcal{O}(\sqrt{d}\sqrt{n}^{1/d}\sigma_{(Cd \log n)})$. \cite{pensia2019estimating} also provided computationally efficient estimator with running time $\mathcal{O}(n^2 d)$.

These existing results depend on $\sigma_{(C \log n)}$ (or alike) at the expense of an additional factor of roughly $\sqrt{n}^{1/d}$, which is most relevant when $C \log n$ of the points have small noises, i.e., when the samples are dominated by high noises. Our results depend on $\sigma_{(\alpha n)}$ (or $\sqrt{\lambda_{(\lceil\alpha n\rceil)}}$ for multivariate) for $\alpha \ge 4/5$, which is more relevant when $1/5$ fraction of the points have high noises. 

For the general setting we consider, we note that the coordinate-wise median (and thus hybrid estimators as in~\cite{pensia2019estimating}) can be used to achieve matching bounds. 
The robust mean estimation methods in the adversarial contamination model can be applied to our setting and get rid of the $\sqrt{d}$, by viewing the sample points with covariance larger than $\lambda_{(\lceil\alpha n\rceil)}$ as adversarial outliers. (The theorems are typically stated for i.i.d.\ data in prior work, but the analysis can be adjusted for the current setting to get the same bounds.) Thus our contribution is providing guarantees for the simple iterative trimming method but does not provide better than existing results.
\end{remark}

\section{LINEAR REGRESSION}
\label{LR}


Given observations $\left\{\left(\boldsymbol{x}_{i}, y_{i}\right)\right\}_{i=1}^{n}$ from the linear model
\begin{equation}
\label{model}
y_{i}=\boldsymbol{x}_{i}^{T} \boldsymbol{\beta}^{\star}+\epsilon_{i}, \quad \forall 1 \leq i \leq n,
\end{equation}
our goal is to estimate $\boldsymbol{\beta}^{\star}$.
Here $\left\{\epsilon_{i}\right\}_{i=1}^{n}$ are independently distributed with expectation $0$ and variances $\{\sigma_i^2\}$, and $\left\{\boldsymbol{x}_{i}\right\}_{i=1}^{n}$ are independent with $\left\{\epsilon_{i}\right\}_{i=1}^{n}$ and satisfy some regularity conditions described below.
Denote the stack matrix $(\boldsymbol{x}_{1}^{T}, \cdots,\boldsymbol{x}_{n}^{T})^{T}$ as $X$, the noise vector $\left(\epsilon_{1}, \cdots, \epsilon_{n}\right)^{T}$ as $\boldsymbol{\epsilon}$ and the response vector $\left(y_{1}, \cdots, y_{n}\right)^{T}$ as $\boldsymbol{y}$. 

\begin{assumption}
\label{a1}
Assume $\|\boldsymbol{x}_i\|_2 = 1$ for all $i$. 
Define
$$
\psi^{-}(k)=\min_{S: |S|=k} \lambda_{\min }\left(X_S^{T}  X_S\right),
$$
where $X_S$ is the submatrix of $X$ consisting of rows indexed by $S \subseteq [n]$.
Assume that for $k= \Omega(n), \psi^{-}(k)\ge k/c_1$ for a constant $c_1 > 0$. 
\end{assumption}

\begin{remark}
$\|\boldsymbol{x}_i\|_2 = 1$ is assumed without loss of generality, as we can always normalize $(\boldsymbol{x}_i, y_i)$, without affecting the assumption on $\epsilon_i$'s. 
The assumption on $\psi^{-}(k)$ states that every large enough subset of $X$ is well conditioned, and has been used in previous work, e.g.,~\cite{bhatia2015robust,shen2019learning}. It is worth mentioning that the uniformity over $S$ assumed is not for convenience but for necessity since the covariances of samples are unknown. Still, the assumption holds under several common settings in practice. For example, by Theorem 17 in~\cite{bhatia2015robust}, this regularity is guaranteed for $c_1$ close to $1$ w.h.p.\ when the rows of $X$ are i.i.d.\ spherical Gaussian vectors and $n$ is sufficiently large.
\end{remark}

\begin{remark}
In our setting, the noise terms are  independent  but  not  necessarily identically distributed.
It is also referred to as heteroscedasticity in linear regression by~\cite{rao1970estimation} and~\cite{horn1975estimating}.
\end{remark}

\subsection{Iterative Trimmed Squares Minimization}
We now apply iterative trimming to linear regression.
The first step is to use the square loss, i.e. let $f_{\boldsymbol{\beta}}\left(\boldsymbol{x}_{i}, y_i \right) =\left( y_i-\boldsymbol{x}_{i}^{T} \boldsymbol{\beta} \right)^2$, then the form of the trimmed loss estimator turns to
\begin{equation}
\label{TLE}
\hat{\boldsymbol{\beta}}^{(\mathrm{TL})}=\argmin _{\boldsymbol{\beta}, S :|S|=\lceil\alpha n\rceil} \sum_{i \in S} \left( y_i-\boldsymbol{x}_{i}^{T} \boldsymbol{\beta} \right)^2.
\end{equation}
Such $\hat{\boldsymbol{\beta}}^{(\mathrm{TL})}$ is first introduced by~\cite{rousseeuw1984least} as least trimmed squares estimator and its statistical efficiency has been studied in previous literature. However, the principal shortcoming is also its high computational complexity; see, e.g.,~\cite{mount2014least}. Hence, we again use iterative trimming.  Let 
$$
\hat{\boldsymbol{\beta}}_S^{(\mathrm{LS})}=\arg \min _{\boldsymbol{\beta}} \sum_{i \in S} \left(y_i-\boldsymbol{x}_i^{\top} \boldsymbol{\beta}\right)^{2}
$$ 
which is the least square estimator obtained by the sample set $S$. When $S=[n]$, we omit the subscript $S$ and write it as $\hat{\boldsymbol{\beta}}^{(\mathrm{LS})}$. The resulting algorithm is called Iterative Trimmed Squares Minimization and described in Algorithm \ref{ITSM}. 

\begin{algorithm}[t] 
	\caption{Iterative Trimmed Squares Minimization (ITSM)} 
	\label{ITSM} 
	\begin{algorithmic}[1]
		\REQUIRE{Samples $\left\{\left(\boldsymbol{x}_{i}, y_{i}\right)\right\}_{i=1}^{n},$ number of rounds $T,$ fraction of samples $\alpha$}
		
		\STATE{$\boldsymbol{\beta}_{0} \leftarrow \hat{\boldsymbol{\beta}}^{(\mathrm{LS})}~$}
		\FOR{$t=0, \cdots, T-1$}
		\STATE Choose samples with smallest current loss: 
		$$S_{t} \leftarrow \argmin _{S :|S|=\lceil\alpha n\rceil} \sum_{i \in S}\left(y_i-\boldsymbol{x}_i^{\top} \boldsymbol{\beta}_{t}\right)^{2}$$\
		\STATE $\boldsymbol{\beta}_{t+1}=\hat{\boldsymbol{\beta}}_{S_t}^{(\mathrm{LS})}$
		\ENDFOR
		\ENSURE{ $\boldsymbol{\beta}_T$}
	\end{algorithmic}
\end{algorithm}

\subsection{Theoretical Guarantees}

The key idea for the analysis is similar to that for mean estimation: the selected set $S$ has sufficiently large overlap with the set $S^{\star}$ of $\lceil \alpha n \rceil$ ``good" points with smallest noises, while the points in $S \backslash S^{\star}$ are not that ``bad" by the selection criterion. Therefore, the algorithm makes progress in each iteration.

\begin{lemma} 
\label{l-general}
Under Assumption \ref{a1}, given ITSM $\alpha \geq \frac{4 c_1}{1 + 4 c_1}$, with probability at least $1-\frac{1 + 4 c_1}{4 c_1 n}$, we have 
\begin{equation}
\label{l-bound}
\left\|\boldsymbol\beta_{t+1}- \boldsymbol\beta^{\star}\right\|_{2} \leq \frac{1}{2}\left\|\boldsymbol\beta_t-\boldsymbol\beta^{\star}\right\|_{2}+ 2 c_1 \sigma_{(\lceil \alpha n \rceil)}.
\end{equation}
\end{lemma}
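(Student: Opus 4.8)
The plan is to mirror the proof of Lemma~\ref{m-general}, with three adaptations: (i) replace the empirical-mean update by the least-squares update and control the resulting $(X_{S_t}^{T}X_{S_t})^{-1}$ factor through Assumption~\ref{a1}; (ii) pass from the vector sum $\sum_{i\in S_t}\epsilon_i\boldsymbol{x}_i$ to the scalar sum $\sum_{i\in S_t}|\epsilon_i|$ using $\|\boldsymbol{x}_i\|_2=1$, so that the monotonicity argument of the mean case carries over; and (iii) translate between the true noise $\epsilon_i$ and the current residual $r_i:=y_i-\boldsymbol{x}_i^{T}\boldsymbol{\beta}_t$, since the trimming criterion ranks points by $r_i$, not by $\epsilon_i$.

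First I would write $\boldsymbol{\beta}_{t+1}-\boldsymbol{\beta}^{\star}=(X_{S_t}^{T}X_{S_t})^{-1}\sum_{i\in S_t}\epsilon_i\boldsymbol{x}_i$, which follows from $y_i=\boldsymbol{x}_i^{T}\boldsymbol{\beta}^{\star}+\epsilon_i$ and the definition of $\hat{\boldsymbol{\beta}}_{S_t}^{(\mathrm{LS})}$. Taking operator norms and invoking Assumption~\ref{a1} with $k=\lceil\alpha n\rceil=\Omega(n)$ — which applies because $\psi^{-}$ is a minimum over \emph{all} index sets of that size, hence in particular covers the data-dependent set $S_t$ — gives $\lambda_{\min}(X_{S_t}^{T}X_{S_t})\ge\alpha n/c_1>0$ and therefore $\|\boldsymbol{\beta}_{t+1}-\boldsymbol{\beta}^{\star}\|_2\le\frac{c_1}{\alpha n}\,\|\sum_{i\in S_t}\epsilon_i\boldsymbol{x}_i\|_2\le\frac{c_1}{\alpha n}\sum_{i\in S_t}|\epsilon_i|$.

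Next, set $S^{\star}=\{i:\sigma_i^2\le\sigma_{(\lceil\alpha n\rceil)}^2\}$ and assume WLOG $|S^{\star}|=|S_t|=\alpha n$, so $|S_t\setminus S^{\star}|=|S^{\star}\setminus S_t|$. Split $\sum_{i\in S_t}|\epsilon_i|=\sum_{i\in S_t\cap S^{\star}}|\epsilon_i|+\sum_{i\in S_t\setminus S^{\star}}|\epsilon_i|$. For the ``bad'' block use $|\epsilon_i|\le|r_i|+\|\boldsymbol{\beta}_t-\boldsymbol{\beta}^{\star}\|_2$ (valid since $\|\boldsymbol{x}_i\|_2=1$); the selection rule on line 3 of Algorithm~\ref{ITSM} gives $|r_i|\le|r_j|$ for every $i\in S_t\setminus S^{\star}$ and $j\in S^{\star}\setminus S_t$, hence $\sum_{i\in S_t\setminus S^{\star}}|r_i|\le\sum_{j\in S^{\star}\setminus S_t}|r_j|$; and, again by $\|\boldsymbol{x}_j\|_2=1$, $|r_j|\le|\epsilon_j|+\|\boldsymbol{\beta}_t-\boldsymbol{\beta}^{\star}\|_2$. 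Chaining these yields $\sum_{i\in S_t\setminus S^{\star}}|\epsilon_i|\le 2|S_t\setminus S^{\star}|\,\|\boldsymbol{\beta}_t-\boldsymbol{\beta}^{\star}\|_2+\sum_{j\in S^{\star}\setminus S_t}|\epsilon_j|$, and since $(S_t\cap S^{\star})\cup(S^{\star}\setminus S_t)=S^{\star}$ we arrive at $\|\boldsymbol{\beta}_{t+1}-\boldsymbol{\beta}^{\star}\|_2\le\frac{2c_1|S_t\setminus S^{\star}|}{\alpha n}\|\boldsymbol{\beta}_t-\boldsymbol{\beta}^{\star}\|_2+\frac{c_1}{\alpha n}\sum_{i\in S^{\star}}|\epsilon_i|$, the exact analogue of the key display in Lemma~\ref{m-general} with $\lambda_{\max}(\Sigma_i)$-distances replaced by $|\epsilon_i|$ and an extra factor $c_1$.

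Finally I would bound the two pieces. Since $|S_t\setminus S^{\star}|\le(1-\alpha)n$, the contraction coefficient is at most $2c_1(1-\alpha)/\alpha$, which is $\le\tfrac12$ precisely when $\alpha\ge\frac{4c_1}{1+4c_1}$. For the noise term, a one-line Chebyshev argument mimicking Lemma~\ref{lemma1} — using $\mathbb{E}|\epsilon_i|\le\sigma_i\le\sigma_{(\lceil\alpha n\rceil)}$ and $\mathrm{Var}(|\epsilon_i|)\le\sigma_i^2\le\sigma_{(\lceil\alpha n\rceil)}^2$ for $i\in S^{\star}$ — gives $\sum_{i\in S^{\star}}|\epsilon_i|\le 2|S^{\star}|\sigma_{(\lceil\alpha n\rceil)}$ with probability at least $1-\tfrac{1}{|S^{\star}|}=1-\tfrac{1}{\alpha n}$, hence $\frac{c_1}{\alpha n}\sum_{i\in S^{\star}}|\epsilon_i|\le 2c_1\sigma_{(\lceil\alpha n\rceil)}$; the bound $\alpha\ge\frac{4c_1}{1+4c_1}$ also turns $\tfrac{1}{\alpha n}$ into $\tfrac{1+4c_1}{4c_1 n}$, giving the stated failure probability. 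The main obstacle is step (iii): because the trimming criterion sorts by residuals rather than noises, one must absorb the $\|\boldsymbol{\beta}_t-\boldsymbol{\beta}^{\star}\|_2$ discrepancies on \emph{both} the $S_t\setminus S^{\star}$ and the $S^{\star}\setminus S_t$ sides, and it is this double bookkeeping that produces the factor $2$ in the contraction term and hence the precise threshold for $\alpha$; a secondary point is ensuring Assumption~\ref{a1} is used in its uniform-over-subsets form so that it legitimately applies to the random $S_t$.
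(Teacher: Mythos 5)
Your proof is correct and reaches the same recursion with the same constants, but it takes a genuinely more elementary route than the paper. The paper keeps the matrix structure throughout: it splits the error into $\mathcal{T}_1=\left\|X^{T}W_t(I-W^{\star})\boldsymbol{\epsilon}\right\|_2$ and $\mathcal{T}_2=\left\|X^{T}W_tW^{\star}\boldsymbol{\epsilon}\right\|_2$, bounds the $(\boldsymbol{\beta}_t-\boldsymbol{\beta}^{\star})$-dependent pieces by the restricted largest eigenvalue $\psi^{+}(|S_t\setminus S^{\star}|)$, and needs a separate claim (borrowed from \cite{shen2019learning}) that the maximum singular value of $\sum_{i\in S_t\setminus S^{\star}}n_i\boldsymbol{x}_i\boldsymbol{x}_{k_i}^{T}$ is at most $\psi^{+}(|S_t\setminus S^{\star}|)$. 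You instead collapse everything to scalar sums at the outset via $\|\sum_i\epsilon_i\boldsymbol{x}_i\|_2\le\sum_i|\epsilon_i|$ and $|\boldsymbol{x}_i^{T}(\boldsymbol{\beta}_t-\boldsymbol{\beta}^{\star})|\le\|\boldsymbol{\beta}_t-\boldsymbol{\beta}^{\star}\|_2$, which lets the residual-vs-noise bookkeeping mirror the mean-estimation proof of Lemma~\ref{m-general} directly and dispenses with the singular-value claim entirely. The paper's intermediate bounds are in principle sharper (a contraction factor $2\psi^{+}(|S_t\setminus S^{\star}|)/\psi^{-}(\lceil\alpha n\rceil)$ could beat $2c_1|S_t\setminus S^{\star}|/(\alpha n)$ when $\psi^{+}(k)\ll k$), but since the paper itself then invokes $\psi^{+}(k)\le k$ via the trace argument, the two proofs yield identical conclusions; yours is shorter and, in my view, cleaner. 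One cosmetic note: the paper invokes ``Markov's inequality'' for the bound $\sum_{i\in S^{\star}}|\epsilon_i|\le 2|S^{\star}|\sigma_{(\lceil\alpha n\rceil)}$ with failure probability $1/|S^{\star}|$, whereas your Chebyshev argument (matching Lemma~\ref{lemma1}) is what actually delivers that rate.
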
 

\begin{proof}
First we introduce some notations. Define
$$
\psi^{+}(k)=\max_{S: |S|=k} \lambda_{\max }\left(X_S^T X_S \right)
$$
where $X_S$ is the submatrix of $X$ consisting of rows indexed by $S$.
Note that $\psi^{+}(k) \le k$, since for any $S$ of size $k$, by  $\left\|\boldsymbol{x}_i\right\|_2 = 1$, $\lambda_{\max}\left(X_S^{T} X_S\right) $ is bounded by
\begin{align*}
\textrm{Tr}\left(X_S^{T} X_S\right) = \textrm{Tr}\left(X_S X_S^{T}\right) 
    & = \sum_{i \in S} \left\|\boldsymbol{x}_i\right\|^2_2 = k.
\end{align*}
Denote $W_t$ as the diagonal matrix where $W_{t,ii} = 1$ if the $i$-th sample is in set $S_t$, otherwise $W_{t,ii} = 0$. Let $S^{\star}$ be a subset of $\left\{i:\sigma_i \leq \sigma_{(\lceil \alpha n \rceil)}\right\}$ with size $\lceil \alpha n \rceil$ and denote $W^{\star}$ as the diagonal matrix w.r.t. $S^{\star}$.

Under Assumption~\ref{a1}, $X_{S_t}^T X_{S_t} =  X^{T}W_{t}X$ is nonsingular, so we have
$
\boldsymbol{\beta}_{t+1}=\left(X^{T}W_{t}X\right)^{-1}X^{T}W_{t}\boldsymbol{y},
$
where we have used  $W_{t}^{2}=W_{t}$. Then
\begin{align*} 
\boldsymbol{\beta}_{t+1}&=\boldsymbol{\beta}^{\star}+\left(X^{T}W_{t}X\right)^{-1}X^{T}W_{t}\boldsymbol{\epsilon}\\
&=\boldsymbol{\beta}^{\star}+\left(X^{T}W_{t}X\right)^{-1}X^{T}W_{t}\left((I-W^{\star})\boldsymbol{\epsilon}+W^{\star}\boldsymbol{\epsilon}\right).
\end{align*} 
Therefore, the error can be bounded by:
\begin{align*} 
& \left\|\boldsymbol{\beta}_{t+1}-\boldsymbol{\beta}^{\star}\right\|_{2}
\\
=& \left\| \left(X^{T}W_{t}X\right)^{-1}X^{T}\left(W_{t}(I-W^{\star})\boldsymbol{\epsilon}+W_{t}W^{\star}\boldsymbol{\epsilon}\right)\right\|_{2}\\
\leq & \frac{1}{\psi^{-}(\lceil \alpha n \rceil)} \left(\underbrace{\left\| X^{T}W_{t}(I-W^{\star})\boldsymbol{\epsilon}\right\|_{2}}_{\mathcal{T}_{1}}+\underbrace{\left\|X^{T}W_{t}W^{\star}\boldsymbol{\epsilon}\right\|_{2}}_{\mathcal{T}_{2}}\right),
\end{align*} 
where we use the spectral norm inequality and triangle inequality
and the fact that $\operatorname{Tr}(W_t)=\lceil \alpha n \rceil$. In the following, we bound the two terms $\mathcal{T}_{1}$  and $\mathcal{T}_{2}$.

First, $\mathcal{T}_{1}$ can be bounded as:
\begin{align*} 
\mathcal{T}_{1}&=\left\| X^{T}W_{t}(I-W^{\star})\boldsymbol{\epsilon}\right\|_{2}\\
&\leq \left\|X^{T}W_{t}(I-W^{\star})X(\boldsymbol{\beta}_t-\boldsymbol{\beta}^{\star}))\right\|_{2}\\
&~~~+\left\|X^{T}W_{t}(I-W^{\star})(\boldsymbol{y}-X\boldsymbol{\beta}_t)\right\|_{2}\\
&\leq \psi^{+}(\left|S_{t} \backslash S^{\star}\right|)\left\|\boldsymbol{\beta}_t-\boldsymbol{\beta}^{\star}\right\|_{2}\\
&~~~+\left\|X^{T}W_{t}(I-W^{\star})(\boldsymbol{y}-X\boldsymbol{\beta}_t)\right\|_{2},
\end{align*} 
since $\operatorname{Tr}((I-W^{\star})W_{t})=\left|S_{t} \backslash S^{\star}\right|$.

By the fact that $|S_t \backslash S^{\star}| = |S^{\star} \backslash S_t|$, there exists a bijection between $S_t \backslash S^{\star}$ and $S^{\star} \backslash S_t$. Denote the image of $i \in S_t \backslash S^{\star}$ as $k_i$. Since the loss $(y_i-\boldsymbol{x}_i^{T}\boldsymbol{\beta}_t)^2$ of sample in $S_t \backslash S^{\star}$ is less than that of sample in $S^{\star} \backslash S_t$, we have $\left|y_i-\boldsymbol{x}_i^{T}\boldsymbol{\beta}_t\right| \leq \left|y_{k_i}-\boldsymbol{x}_{k_i}^{T}\boldsymbol{\beta}_t\right|$ for any $i \in S_t \backslash S^{\star}$. Hence we can write
\begin{align*}
~~~&\left\|X^{T}W_{t}(I-W^{\star})(\boldsymbol{y}-X\boldsymbol{\beta}_t)\right\|_{2} \\
= & \left\|\sum_{i \in S_t \backslash S^{\star}}(y_i-\boldsymbol{x}_i^{T}\boldsymbol{\beta}_t) \boldsymbol{x}_i \right\|_{2}\\
\leq & \left\|\sum_{i \in S_t \backslash S^{\star}}n_i (y_{k_i}-\boldsymbol{x}_{k_i}^{T}\boldsymbol{\beta}_t) \boldsymbol{x}_i \right\|_{2}\\
\leq &\left\|\sum_{i \in S_t \backslash S^{\star}}n_i \epsilon_{k_i} \boldsymbol{x}_i \right\|_{2} + \left\|\sum_{i \in S_t \backslash S^{\star}}n_i \boldsymbol{x}_i {\boldsymbol{x}_{k_i}}^{T}(\boldsymbol{\beta}_t-\boldsymbol{\beta}^{\star}) \right\|_{2}
\end{align*}
where $n_i$ is either $1$ or $-1$. 

By the assumption that $\left\|\boldsymbol{x}_i \right\|_{2} = 1$, 
\begin{align*}
\left\|\sum_{i \in S_t \backslash S^{\star}}n_i \epsilon_{k_i} \boldsymbol{x}_i \right\|_{2} 
\leq & \sum_{i \in S_t \backslash S^{\star}} |\epsilon_{k_i}| =  \sum_{i \in S^{\star} \backslash S_t} |\epsilon_i|.
\end{align*}
Meanwhile,  
\begin{align*}
 \left\|\sum_{i \in S_t \backslash S^{\star}}n_i \boldsymbol{x}_i {\boldsymbol{x}_{k_i}}^{T}(\boldsymbol{\beta}_t-\boldsymbol{\beta}^{\star}) \right\|_{2} \le s_{\max} \cdot \left\| \boldsymbol{\beta}_t-\boldsymbol{\beta}^{\star} \right\|_{2}.
\end{align*}
Here $s_{\max} := s_{\max}\left(\sum_{i \in S_t \backslash S^{\star}}n_i \boldsymbol{x}_i {\boldsymbol{x}_{k_i}}^{T}\right)$ denotes the maximum singular value and is bounded as follows.

\begin{claim}
$s_{\max} \leq \psi^{+}\left(|S_t \backslash S^{\star}|\right)$.
\end{claim}
\begin{proof}
The proof is implicit in the proof for Theorem $7$ in~\cite{shen2019learning}.
With matrix notations, $\sum_{i \in S_t \backslash S^{\star}}n_i \boldsymbol{x}_i {\boldsymbol{x}_{k_i}}^{T}$ can be then written as $X^{T}W_t(I-W^{\star})N P X$, where $N$ is  diagonal with diagonal entries in $\{1, -1\}$ and $P$ is some permutation matrix. Then
\begin{align*}
s_{\max} 
= \max_{\|u\|_2 = 1, \|v\|=1} u^T X^{T}W_t(I-W^{\star})N P X v.
\end{align*}
Let $\tilde{u} =Xu$ and $\tilde{v}=Xv$, $s_{\max}$ is bounded by
\begin{align*}
\sum_{i \in S_t \backslash S^{\star}} |\tilde{u}_{r_i} \tilde{v}_{t_i}| \le 
\max\{\sum_{i \in S_t \backslash S^{\star}} \tilde{u}_{r_i}^2, \sum_{i \in S_t \backslash S^{\star}} \tilde{v}_{t_i}^2\}
\end{align*}
for some sequences $\{r_i\}$ and $\{t_i\}$. So $s_{\max}$ is bounded by the larger of the maximum singular values of $X^{T}W_t(I-W^{\star})X $ and $X^{T}P^T N^T W_t(I-W^{\star})N P X $, which is bounded by $\psi^{+}\left(|S_t \backslash S^{\star}|\right)$. 
$\blacksquare$
\end{proof}
With this claim, we can derive
$$
\mathcal{T}_{1} \leq \sum_{i \in S^{\star} \backslash S_t} |\epsilon_i| + 2\psi^{+}\left(|S_t \backslash S^{\star}|\right) \left\|\boldsymbol{\beta}^{\star}-\boldsymbol{\beta}_t\right\|_2.
$$

Next, $\mathcal{T}_{2}$ can be bounded as:
\begin{align*}
\mathcal{T}_2 = \left\|\sum_{i \in S_{t} \cap S^{\star}}\epsilon_i \boldsymbol{x}_i \right\|_2 \leq \sum_{i \in S_{t} \cap S^{\star}} \left|\epsilon_i\right|.
\end{align*}

Combining the inequalities above, we have 
\begin{align*} &~~~\left\|\boldsymbol{\beta}^{\star}-\boldsymbol{\beta}_t\right\|_2 = \frac{1}{\psi^{-}(\lceil \alpha n \rceil)} \left(\mathcal{T}_1 + \mathcal{T}_2\right)\\
&\leq \frac{1}{\psi^{-}(\lceil \alpha n \rceil)}
\left(\sum_{i \in S^{\star}} \left|\epsilon_i\right| + 2\psi^{+}\left(|S_t \backslash S^{\star}|\right) \left\|\boldsymbol{\beta}^{\star}-\boldsymbol{\beta}_t\right\|_2\right)\\
&=\underbrace{\frac{2\psi^{+}\left(|S_t \backslash S^{\star}|\right)}{\psi^{-}(\lceil \alpha n \rceil)}}_{\kappa}
\left\|\boldsymbol{\beta}^{\star}-\boldsymbol{\beta}_t\right\|_2 + \frac{\sum_{i \in S^{\star}} \left|\epsilon_i\right|}{\psi^{-}(\lceil \alpha n \rceil)}.
\end{align*}
By assumption, there exist a constant $c_1$ such that $\frac{\alpha n}{\psi^{-}(\lceil \alpha n \rceil)} \leq c_1$ and $\frac{\psi^{+}(\left|S_{t} \backslash S^{\star}\right|)}{\psi^{-}(\lceil \alpha n \rceil)} \leq c_1\cdot \frac{\left|S_{t} \backslash S^{\star}\right|}{\lceil \alpha n \rceil}$. Without loss of generality, assume $\alpha n$ is an integer. Since $\left|S_{t} \backslash S^{\star}\right| \leq (1-\alpha)n$, when $\alpha \geq \frac{4 c_1}{1 + 4 c_1}$,  
$
\kappa \leq 2 c_1\cdot \frac{1-\alpha}{\alpha} \leq \frac{1}{2}.
$
And $\sum_{i \in S^{\star}} \left|\epsilon_i\right|$ is bounded by $2|S^{\star}|\sigma_{(\lceil \alpha n \rceil)}$ with probability at least $1-\frac{1}{|S^{\star}|}$, using Markov's inequality.
$\blacksquare$
\end{proof}

\begin{theorem}
\label{l-final}
Under Assumption \ref{a1}, given ITSM with $\alpha \geq \frac{4 c_1}{1 + 4 c_1}$ and $T=\Theta\left(\log_{2} \frac{\left\|\boldsymbol\beta_0-\boldsymbol\beta^{\star}\right\|_2}{\sigma_{(\lceil \alpha n \rceil)}}\right)$, it holds that
\begin{equation}
\label{l-error}
\left\|\boldsymbol\beta_T-\boldsymbol\beta^{\star}\right\|_2 \leq c c_1 \sigma_{(\lceil \alpha n \rceil)}
\end{equation}
with probability at least $1-T\frac{1 + 4 c_1}{4 c_1 n}$. 
\end{theorem}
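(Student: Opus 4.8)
The plan is to mirror the proof of Theorem~\ref{m-final}: iterate the one-step contraction of Lemma~\ref{l-general} and telescope the resulting geometric series. Concretely, under Assumption~\ref{a1} and with $\alpha \ge \frac{4c_1}{1+4c_1}$, Lemma~\ref{l-general} gives at each iteration $t$ the bound $\|\boldsymbol\beta_{t+1}-\boldsymbol\beta^{\star}\|_2 \le \frac12\|\boldsymbol\beta_t-\boldsymbol\beta^{\star}\|_2 + 2c_1\sigma_{(\lceil\alpha n\rceil)}$. Unrolling this recursion from $t=T-1$ down to $t=0$ yields
\begin{align*}
\left\|\boldsymbol\beta_T-\boldsymbol\beta^{\star}\right\|_2
&\le \frac{1}{2^T}\left\|\boldsymbol\beta_0-\boldsymbol\beta^{\star}\right\|_2 + 2c_1\sigma_{(\lceil\alpha n\rceil)}\sum_{i=0}^{T-1}\frac{1}{2^i}\\
&\le \frac{1}{2^T}\left\|\boldsymbol\beta_0-\boldsymbol\beta^{\star}\right\|_2 + 4c_1\sigma_{(\lceil\alpha n\rceil)}.
\end{align*}

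Next I would choose $T=\Theta\!\left(\log_2\frac{\|\boldsymbol\beta_0-\boldsymbol\beta^{\star}\|_2}{\sigma_{(\lceil\alpha n\rceil)}}\right)$ with the hidden constant large enough that $2^{-T}\|\boldsymbol\beta_0-\boldsymbol\beta^{\star}\|_2 \le c_1\sigma_{(\lceil\alpha n\rceil)}$, so that the right-hand side above is at most $(4+1)c_1\sigma_{(\lceil\alpha n\rceil)}=c\,c_1\sigma_{(\lceil\alpha n\rceil)}$, which is exactly (\ref{l-error}). Note that $\boldsymbol\beta_0=\hat{\boldsymbol\beta}^{(\mathrm{LS})}$ is well defined because Assumption~\ref{a1} applied with $k=n=\Omega(n)$ makes $X^TX$ nonsingular; the quantity $\|\boldsymbol\beta_0-\boldsymbol\beta^{\star}\|_2$ enters only through the logarithm defining $T$ and does not affect the order of the final error.

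For the confidence statement, the one-step bound of Lemma~\ref{l-general} holds with probability at least $1-\frac{1+4c_1}{4c_1 n}$ at each iteration; taking a union bound over the $T$ iterations, all $T$ one-step bounds hold simultaneously with probability at least $1-T\frac{1+4c_1}{4c_1 n}$, and on that event the telescoping above is valid, giving the claimed probability.

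I do not expect a genuine obstacle in this theorem: the argument is mechanical once Lemma~\ref{l-general} is available, and the real work sits in that lemma (the control of $\kappa$ and of $\sum_{i\in S^{\star}}|\epsilon_i|$), which is already established. The only points needing a little care are (i) fixing the constant in $T=\Theta(\cdot)$ so the geometrically decaying term is dominated by $\sigma_{(\lceil\alpha n\rceil)}$, and (ii) the union-bound bookkeeping — in fact the random event underlying Lemma~\ref{l-general} is a single statement about $\sum_{i\in S^{\star}}|\epsilon_i|$ that does not depend on $t$, so a $T$-free probability $1-\frac{1+4c_1}{4c_1 n}$ would also suffice; we state the (slightly weaker) $1-T\frac{1+4c_1}{4c_1 n}$ to parallel Theorem~\ref{m-final}.
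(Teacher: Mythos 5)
Your proposal is correct and matches the paper's approach: the paper proves this theorem by exactly the argument you give, namely unrolling the one-step contraction of Lemma~\ref{l-general} as in Theorem~\ref{m-final}, summing the geometric series, choosing $T$ so the $2^{-T}$ term is dominated, and taking a union bound over the $T$ iterations to get the stated probability. No gaps.
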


\begin{proof}
It follows from Lemma~\ref{l-general} by an argument similar to that for Theorem~\ref{m-final}.
$\blacksquare$
\end{proof}

\begin{remark}
When $\boldsymbol{x_i}$'s are i.i.d.\ spherical Gaussians, Assumption~\ref{a1} can be satisfied with $c_1$ close to 1. Then we require $\alpha \ge 4/5$ and the error bound holds with probability $\ge 1- 5T/(4n)$, similar to that for mean estimation. 
Also, (\ref{l-error}) is obtained without extra assumption on noise $\epsilon_i$ except for assuming its second order moment exists. 
If $\epsilon_i \sim \mathcal{N}(0,\sigma_i^2)$, the previous bound holds with a higher probability $1-e^{-c^{\prime}n}$.
\end{remark}

\begin{remark}
We now discuss existing results.
~\cite{pensia2019estimating} also adapted its mean estimation methodology to linear regression. When $\boldsymbol{x}_i$'s are from a multivariate Gaussian with covariance matrix $\Sigma$, w.h.p.\ it obtained error bound 
$\frac{c^{\prime}n \sigma_{(cd\log n)}}{\sqrt{\lambda_{\min}(\Sigma)}}$. Again, the result depends on $\sigma_{(cd\log n)}$ with an additional factor $n$, while ours depends on $\sigma_{(\alpha\log n)}$ for a constant $\alpha$ (our bound will also have a $\frac{1}{\sqrt{\lambda_{\min}(\Sigma)}}$ factor for such Gaussians). However, their run time is $\mathcal{O}(n^d)$, exponential in the dimension $d$, while ours is polynomial.

There also exist studies for robust linear regression in the adversary setting, where a small fraction of points are being corrupted by an adversary (e.g., \cite{bhatia2015robust,liu2018high,shen2019learning,diakonikolas2019efficient}). 
It is unclear if their results directly apply to our setting, since they have additional assumptions on the data. 
\end{remark}


\section{Experiments}

\subsection{Mean Estimation}
To validate Theorem $\ref{m-final}$, we repeat the procedure that first generating samples $\left\{\boldsymbol{x}_{i}\right\}_{i=1}^n$ under designed distribution $\left\{F_{i}\right\}_{i=1}^n$, then run Algorithm $\ref{ITM}$ with fraction $\alpha=\frac{4}{5}$ and iteration $T=20$ and finally output error $\left|\boldsymbol\mu_T-\boldsymbol\mu^{\star}\right|$. We report the average error over $R$ repetitions; we set $R=200$ for the univariate case and $R=20$ for the multivariate case. For comparison, we also report the error of the empirical mean over the first $\alpha n$ samples with smallest variance (or norm of covariance matrix), which is the estimator given an oracle knowing all the covariances, and thus referred to as Oracle Mean (OM). 
It is easy to be seen that the latter average is only relevant to $\left\{\sigma_{(i)}\right\}_{i=1}^{\alpha n}$ (or $\left\{\lambda_{(i)}\right\}_{i=1}^{\alpha n}$) and regardless of the rest $(1-\alpha)n$ samples. 

\noindent\textbf{Univariate Case}
We first present experiments when $d=1$ under two designed settings of the entangled distributions $\left\{F_{i}\right\}_{i=1}^n$. 

\noindent{\bf Setting 1} For $i \leq \alpha n$, $F_{i}=\mathcal{N}(0,1)$ and $F_{i}=\mathcal{N}(0,i^2)$ for $i > \alpha n$.

\noindent{\bf Setting 2} For $i \leq \alpha n$, $F_{i}=\mathcal{N}(0,(\log i)^2)$ and $F_{i}=\mathcal{N}(0,i^2)$ for $i > \alpha n$.

Figure $\ref{f1}$ shows the performances of ITM and OM.
In both settings, ITM obtains roughly the same average error as OM, showing that the error bound of ITM is regardless of all $\sigma_i \geq \sigma_{(\lceil \alpha n \rceil)}$. Besides, ITM actually converges to the truth $\boldsymbol{\mu}^{\star}$, in the same rate as OM. It suggests ITM can achieve a vanishing error, at least in some special cases. This is left as a future direction. 

\begin{figure}[t]
	\centering
	\includegraphics[scale=0.35]{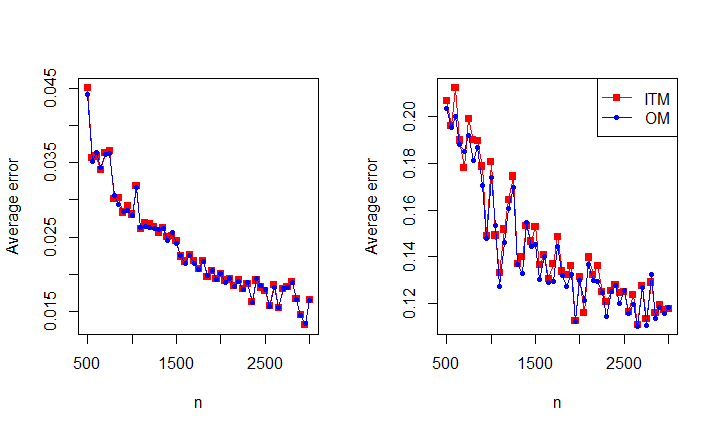}
	\caption[Figure]{Univariate mean estimation. Left: setting 1; Right: setting 2. $x$-axis: sample size; $y$-axis: error after $T=20$ iterations. 
	}
	\label{f1} 
\end{figure}

\noindent\textbf{Multivariate Case}
For the multivariate case, we set $d=10$ and provide two simulation settings of $\left\{F_{i}\right\}_{i=1}^n$.

\noindent{\bf Setting 3 (radically symmetric)}  $F_{i}=\mathcal{N}(0,I_{10})$ for $i \leq \alpha n$, and $F_{i}=\mathcal{N}(0,100 \cdot I_{10})$ for $i > \alpha n$.

\noindent{\bf Setting 4 (radically asymmetric)} Let $\Sigma_0$ be a stochastic positive definite matrix generated as follows.
(1) Set the diagonal entries to 1.
(2) Off-diagonal entries are set to zero or nonzero with equal probability. For each nonzero off-diagonal entry, it is sampled from the uniform distribution on interval $(-0.5,0.5)$. Then we make it symmetric by forcing the lower triangular matrix equal to the upper triangular, and then add a diagonal matrix ${c}\textbf{I}_{10}$ to make sure it is positive definite, where ${c}$ is chosen to make the smallest eigenvalue of the matrix equal to 0.2. Finally, let $F_{i}=\mathcal{N}(0,\Sigma_0)$ for $i \le \alpha n$ and $F_{i}=\mathcal{N}(0,100 \cdot \Sigma_0)$ for $i > \alpha n$.   

Figure~\ref{f2} shows the results. 
Again, in both settings, ITM also obtain the same average error with OM. The results for setting 4 suggest that the radical symmetry assumption may not be needed. 

\begin{figure}[t]
	\centering
	\includegraphics[scale=0.35]{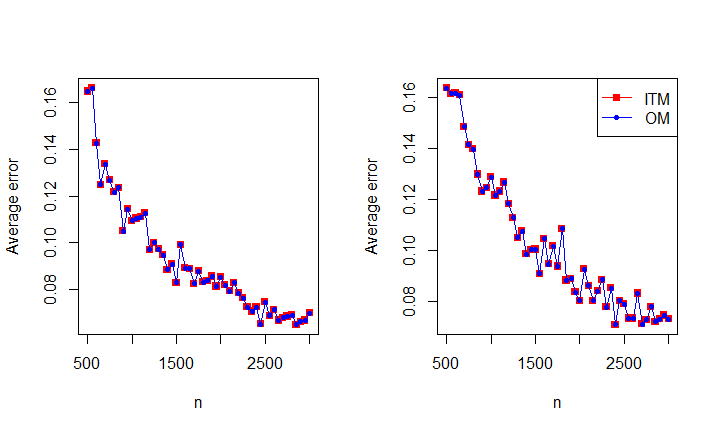} 
	\caption[Figure]{Multivariate mean estimation. Left: setting 3; Right: setting 4. $x$-axis: sample size; $y$-axis: error after $T=20$ iterations. 
	}   
    \label{f2}  
\end{figure}

\subsection{Linear Regression}
We now consider the linear regression problem with $d=100$. We generate data according to the model $(\ref{model})$, where each row of matrix $X$ is independently sampled from $\mathcal{N}(\boldsymbol{0},I_d)$ and we choose $\boldsymbol{\beta}^{\star}$ to be a random vector with $l_2$ norm 1. The noise vector is generated s.t.\ for $i \leq \alpha n$, $\epsilon_{i} \sim \mathcal{N}(0,1)$ and $\epsilon_{i} \sim \mathcal{N}(0,100)$ for $i > \alpha n$. We repeatedly generate data and run ITSM for $R=20$ times, then report the average errors of both ITSM and the least square estimator over the first $\alpha n$ samples with smallest noise variances, which we refer to as Oracle Least Square (OLS). We also set $\alpha = \frac{4}{5}$ and $T=20$ for simplicity though the smallest possible value of $\alpha$ in Lemma~$\ref{l-general}$ is dependent on $X$.

\begin{figure}[h]
	\centering
	\includegraphics[scale=0.35]{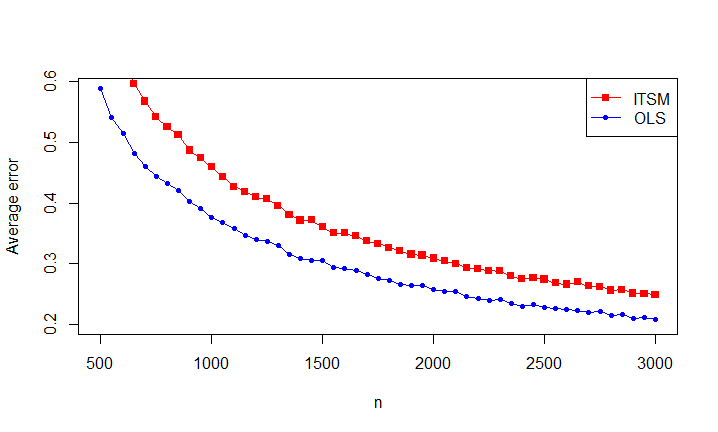}
	\caption[Figure]{Linear Regression. $x$-axis: sample size; $y$-axis: error after $T=20$ iterations. 
	}     
	\label{f3}
\end{figure}

Figure~\ref{f3} shows the results. Similar to mean estimation, the iterative trimming method has error closely tracking those of the oracle method. This again verifies the effectiveness of iterative trimming and provide positive support for our analysis.

\section*{Acknowledgement}
This work was supported in part by FA9550-18-1-0166. The authors would also like to acknowledge the support provided by the University of Wisconsin-Madison Office of the Vice Chancellor for Research and Graduate Education with funding from the Wisconsin Alumni Research Foundation.

\bibliographystyle{plainnat}
\bibliography{ref}

\end{document}